\documentclass{article}
\usepackage{amsmath,amssymb,amsthm} 
\usepackage{spconf,amsmath,graphicx}
\usepackage[tight,normalsize]{subfigure}
\usepackage[ruled,linesnumbered,titlenumbered]{algorithm2e}

\newtheorem{proposition}{Proposition}

\newtheorem{lemma}{Lemma}
\DeclareMathSizes{9}{9}{6.3}{4.5}


\def\L{{\cal L}}

\title{Efficient Superimposition Recovering Algorithm}
%
\name{Han Li$^{\star}$\qquad Kun Gai$^{\star \dagger}$ \qquad Pinghua Gong$^{\star}$ \qquad Changshui Zhang$^{\star}$ }
\address{$^{\star}$ Dept. of Automation, Tsinghua University, Beijing, China\\
$^{\dagger}$ Alibaba Group, Beijing, China}
%
%
%
\begin{document}
%
\maketitle

\def\etal{et al.}
\def\eg{e.g.}

\begin{abstract}
In this article, we address the issue of recovering latent transparent layers from superimposition images. Here, we assume we have the estimated transformations and extracted gradients of latent layers. To rapidly recover high-quality image layers, we propose an Efficient Superimposition Recovering Algorithm (ESRA) by extending the framework of accelerated gradient method. In addition, a key building block (in each iteration) in our proposed method is the proximal operator calculating. Here we propose to employ a dual approach and present our Parallel Algorithm with Constrained Total Variation (PACTV) method. Our recovering method not only reconstructs high-quality layers without color-bias problem, but also theoretically guarantees good convergence performance.
\end{abstract}

\begin{keywords}
Superimposition recovering, proximal operator, optimization
\end{keywords}

\vspace{30pt}


Assume we have the extracted gradients and transformations of each latent transparent layer, the reconstruction step is the final crucial part for reflection separation. We assume the variations of transmitted layers in each \textit{mixtures} conform to a parametric transformation $f(x,\theta)$ ($x$ is the pixel coordinates) with different parameters $\theta_i$. Here we propose an Efficient Superimposition Recovering Algorithm (ESRA) to fast recover the high quality latent layers.

\section{Efficient Superimposition Recovering Algorithm}\label{sec:Restoration:ssec:RecoverObj}

\def\vec{\text{vec}}
\def\lv{l^{vec}}

With estimated transformation parameters $\theta_i$, we align the transmitted layers by warping $\textit{mixtures}$ $I_i$ with $f^{-1}(x,\theta_i)$. Then our mixing model is rewritten as:
\begin{equation}
I_i(f_i^{-1}(\!x\!)) = a_{i1}L^t(\!x\!)+a_{i2}L^{r(\!i\!)}(f_i^{-1}(\!x\!)),\qquad i=1,\cdots,m.
\label{eq:warpedmixingmodel}
\end{equation}
Here $L^t$ is the latent transmitted layer, $L^{r(\!i\!)}$ is the reflected layer in $i$th \textit(mixtures), $a_{i1}, a_{i2}$ is the mixing coefficients. With this new mixing model, the influence of parametric transformations $f(x,\theta_i)$ can be ignored in the intermediate recovering process. For simplicity, we use $I_i(\!x\!)$ to represent $I_i(f_i^{-1}(\!x\!))$. $L_1(x)$ and $L_{i+1}(\!x\!)$ denote $L^t(\!x\!)$ and $a_{2i}L^{r(\!i\!)}(f_i^{-1}(\!x\!))$, respectively. Let $E^i(\!x\!)$ stand for the extracted gradients from $L^i(\!x\!)$. To recover high quality latent image layers, we propose to employ $L_1$ penalty on the extracted gradients and nonnegative constraints on the layers' intensities along with the $L_2$ loss of the mixing model. Thus our recovering objective function is written as:
\vspace{-8pt}
\begin{equation}
\begin{aligned}
&\!\!\!\!\!\underset{0\leq \lv \leq1}{\min{}}F(\lv)\! =\!\lambda{}\sum_{x,i=1}^{m+1}| \nabla L^i(\!x\!)\!
-\! E^{i}(\!x\!)| \\ &+
\!\!\!\sum_{x,i=1}^{m}\!\frac{1}{2}\big(I_{i}(\!x\!)\!-\!a_{i1}L_1(\!x\!)\!-\!L_{i\!+\!1}(\!x\!)\big)\!^2
\label{eq:originalLoss}
\end{aligned}
\vspace{-5pt}
\end{equation}
where $\lv \doteq{}[\vec^{\top}(L^1),\cdots,\vec^{\top}(L^{m\!+\!1})]^\top$ is a large vector containing all pixel values in all latent layers. The first $L_1$ term enforces the agreement between reconstructed layer gradients and extracted layer gradients, while the second $L_2$ term tends to satisfy our mixing mode. Since the extracted gradients are nonzero at very few coordinates, the $L_1$ norm term not only prefers layers with sparse gradients but also avoids over-smooth results. $\lambda$ is a trade off coefficient.


\def\arg{\text{argmin}}
\def\L{L_s}

To solve the nonsmooth convex optimization model (\ref{eq:originalLoss}) efficiently, we denote
\vspace{-5pt}
\begin{equation}
\begin{aligned}
f(\lv)&=\sum_{x,i=1}^{m}\!\frac{1}{2}\big(I_{i}(\!x\!)\!-\!a_{i1}L_1(\!x\!)\!-\!L_{i\!+\!1}(\!x\!)\big)\!^2, \  \text{s.t}\ 0\leq \lv \leq 1,\\
g(\lv)&= \!\lambda{}\sum_{x,i=1}^{m+1}| \nabla L^i(\!x\!)
- E^{i}(\!x\!)|.
\label{eq:GFX}
\end{aligned}
\vspace{-2pt}
\end{equation}
Here $g(\lv)$ is the $\ell_1$ penalty on the extracted gradients and $f(\lv)$ corresponds to the $L_2$ loss and nonnegative constraints. $f(\lv)$ can be formulated in the following matrix form:
\vspace{-5pt}
\begin{equation}
\begin{aligned}
f(\lv)&= \frac{1}{2}||A\lv-b||^2,\ \ \mathrm{s.t.}~0\leq{}\lv\leq{}1,\\
\text{where~}
A&=\left[
\begin{array}{@{}c@{\:}c@{\:}c@{\:}c@{}}
a_{11}I& I&      & \\
\vdots &  &\ddots& \\
a_{m1}I&  &      &I
\end{array}
\right],
b = \left[
\begin{array}{@{}c}
\vec(I_1)\\
\vdots  \\
\vec(I_m)
\end{array}
\right],
\label{eq:FX}
\end{aligned}
\vspace{-5pt}
\end{equation}
where $f(\lv)$ is continuously differentiable and $\nabla{}f(\lv)=A^\top(A\lv\!-\!b)$, of which Lipschitz constant $L(f)\!=\!\lambda_{max}(A^\top{}\!A)\!=\!\sum_ia_{i1}^2\!+\!1$, and $I\!\in\!\mathbb{R}^{hw\!\times\!hw}$ is the unit matrix.
We note the objective function in (\ref{eq:originalLoss}) is a composite function of a differential term $f(\lv)$ and a non-differential term $g(\lv)$. Denote
\vspace{-5pt}
\begin{equation}
\begin{aligned}
&P_{\L,\lv_{k-1}}(\lv) = f(\lv_{k-1}) + \langle \nabla f(\lv_{k-1}),\lv - \lv_{k-1} \rangle \\ &+ \frac{\L}{2}\|\lv - \lv_{k-1}\|^2,
\label{eq:proximalMap}
\vspace{-5pt}
\end{aligned}
\end{equation}
which is the first order Taylor expansion of $f(\lv)$ at $\lv_{k-1}$, with the squared Euclidean distance between $\lv$ and $\lv_{k-1}$ as the regularization term. The traditional gradient descent algorithm obtains the solution at the $k$-th iteration $(k\geq 1)$ by $\lv_k =  \text{arg} \min P_{\L,\lv_{k-1}}(\lv) + g(\lv)$  with a proper step size $\L$ (greater than $L(f)$). Here we propose to employ the accelerated gradient descent \cite{nemirovski2005efficient,nesterov2004introductory} to solve the reconstruction problem, named Efficient Superimposition Recovering Algorithm (ESRA). Here we generate a solution at the $k$-th iteration $(k\geq 1)$ by computing the following proximal operator
\vspace{-5pt}
\begin{equation}
\lv_k \to \text{arg}\min_{ 0\leq \lv \leq 1} P_{\L,Y_k}(\lv) + g(\lv)
\label{eq:proximalOperator}
\vspace{-5pt}
\end{equation}
where $Y_1 = \lv_0$ and $Y_k = \lv_{k-1} + \frac{t_{k-2}-1}{t_{k-1}}(\lv_{k-1} - \lv_{k-2})$ for $k\geq 1$. We note that $Y_k$ is a linear combination of $\lv_{k-1}$ and $\lv_{k-2}$. The combination coefficient plays an important role in the convergence of the algorithm. As suggested by \cite{beck2009}, we set $t_0 =1 $ and $t_k = (1+ \sqrt{t_{k-1}^2 +1})/2 $ for $k\geq 1 $. According to the theoretical analysis in \cite{beck2009}, this accelerated gradient descent method can get within $O(1/k^2)$ of the optimal objective value after $k$ steps. While solving problem (\ref{eq:proximalOperator}) is still very challenging, we propose a Parallel Algorithm with Constrained Total Variation (PACTV) method to find the optimal solution, which is presented in the sequel.

\def\lv{l^{vec}}
\def\arg{\text{argmin}}
\begin{algorithm}[bht]\label{alg:ESRA}
\SetKwInOut{Input}{Input} \SetKwInOut{Output}{Output}
\Input{$\L\geq L(f)$- An upper bound on the Lipschitz constant of $\nabla f$. $N$ is the total number of iterations.}
\textbf{Step 0}: Take $Y_1=\lv_0=0_{(hw(m+1),1)},t_1=1.$\\

\textbf{Step k}:$(k\geq1)$ Compute
\vspace{-5pt}
\begin{eqnarray}
\lv_k&=&\text{arg}\min_{0\leq \lv \leq 1}P_{\L,Y_k}(\lv) + g(\lv),\qquad\qquad\\
\vspace{-3pt}
t_{k}&=&\frac{1+\sqrt{1+4t_{k-1}^2}}{2},\qquad\qquad\qquad\\
\vspace{-3pt}
Y_{k+1}& = &\lv_{k} + \frac{t_{k-1}-1}{t_k}(\lv_k -\lv_{k-1})
\end{eqnarray}

\Output{$\lv_N$ is the final recovered result.}
\caption{ESRA($\L,N$)}
\end{algorithm}

\section{PACTV via dual approach}\label{sec:Restoration:ssec:FindPLYk}

Given problem (\ref{eq:proximalOperator}), we observe it can be solved block separable in the following way. If we denote $Y_k\!-\!\frac{1}{\L}\nabla{}f(\!Y_k\!)\doteq[\vec^\top\!(d_1),\cdots,\vec^\top\!(d_{m\!+\!1})]^\top$ ($d_i\in \mathbb{R}^{h\!\times{}\!w}\ i\!=\!1,\!\cdots\!,m\!+\!1)$, we can split $Y_k \!-\!\frac{1}{\L}\nabla{}f(\!Y_k\!)$ into $m+1$ separable parts. Then by employing the definition of (\ref{eq:GFX}), we transform (\ref{eq:proximalOperator}) into the following form:
\vspace{-8pt}
\begin{equation}
\begin{aligned}
\lv_k =& \underset{0\leq \lv_n \leq 1}{\arg}\big\{\sum_{i=1}^{m+1}\sum_{x}\big(\lambda{}|\nabla L^i(x)\!
-\!E^{i}(x)|\!\\
&+\!\frac{\L}{2}||L^i(x)\!-\!d_i(x)||^2\big)\big\}.
\label{eq:PLYkobj}
\end{aligned}
\vspace{-5pt}
\end{equation}
As illustrated in (\ref{eq:PLYkobj}), finding $\lv_k$ is to solve following $m\!+\!1$ separable problems with constrained total variation in parallel:
\vspace{-5pt}
\begin{equation}
\begin{aligned}
\min_{0\leq L \leq 1} \sum_x \big(\!\frac{1}{2}||L(x)\!-\!d(x)||^2&+\beta|\nabla L(x)\!-\!E(x)|\big).
\label{eq:FGPobjective}
\end{aligned}
\vspace{-5pt}
\end{equation}
Here $\beta=\lambda/\L$, and $L,d,E$ represent $L^i,d_i,E^i$, respectively. Similar with the image denoising problem~\cite{beck2009fast,beck2009}, we propose a dual approach to solve (\ref{eq:FGPobjective}) and give some notation in order:
\vspace{-5pt}
\begin{itemize}
\item {$\mathcal{P}$ is the set of matrix-pairs $(p,q)$ where $p\in \mathbb{R}^{(h\!-\!1)\!\times{}\!w}$ and $q\in \mathbb{R}^{h\!\times{}\!(w\!-\!1)}$ that satisfy $|p_{i,j}|\leq 1~and~|q_{i,j}|\leq 1\quad \forall i,j$. And we assume $p_{0,j}=p_{h,j}=q_{i,0}=q_{i,w}\equiv 0$, for every $i=1,\cdots,h,~j=1,\cdots,w$.
}
\item {The linear operation $\mathcal{L}:\mathbb{R}^{(h\!-\!1)\!\times{}\!w}\times{}\mathbb{R}^{h\!\times{}\!(w\!-\!1)} \to \mathbb{R}^{h\!\times{}\!w}$ is defined by the formula $\mathcal{L}(p,q)_{i,j}=p_{i\!-\!1,j}+q_{i,j\!-\!1}-p_{i,j}+q_{i,j} \quad \forall i,j.$
    }
\item {The operator $\mathcal{L}^T:\mathbb{R}^{h\!\times{}\!w} \to \mathbb{R}^{
    (h\!-\!1)\!\times{}\!w}\times{}\mathbb{R}^{h\!\times{}\!(w\!-\!1)} $ which is adjoint to $\mathcal{L}$ is given by $\mathcal{L}^\top (L)=(p,q),$ where $p_{i,j}=L_{i\!+\!1,j}-L_{i,j}$ and $q_{i,j}=L_{i,j\!+\!1}-L_{i,j}$.
    }
\item
{ $P_C$ is the orthogonal projection operator on the convex closed set $C = \{L : 0\leq L \leq 1\}$.}
\end{itemize}
Equipped with these notation, we derive a dual problem of (\ref{eq:FGPobjective}), and give following proposition to state the relation between the primal and dual optimal solutions.
\begin{proposition}\label{theorem:dualfunction}
\vspace{-5pt}
Let $(p,q)\in \mathcal{P}$ be the optimal solution of the problem
\vspace{-5pt}
\begin{equation}
\begin{aligned}
\underset{(p,q)\in \mathcal{P}}{\min}\big\{ H(p,q)\equiv\frac{1}{2}\!(\!-\!||H_C(\!d\!-\!\beta\!\mathcal{L}(p,q))||^2\!+\!\\||d\!-\!\beta \mathcal{L}(p,q)||^2)+\beta\big[Tr(p^\top\!E_1)+Tr(q^\top\!E_2)\big] \big\}.\label{eq:dualFGPpq}
\end{aligned}
\vspace{-5pt}
\end{equation}
where $H_C(L)=L-P_C(L)$ for every $L\in \mathbb{R}^{h\!\times{}\!w}$. Then the optimal solution of (\ref{eq:FGPobjective}) is given by $L = P_C(d-\beta \mathcal{L}(p,q))$.
\end{proposition}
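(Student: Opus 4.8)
The plan is to dualize the constrained total-variation penalty in (\ref{eq:FGPobjective}), adapting the argument of Beck--Teboulle \cite{beck2009fast} to the shifted gradient $\nabla L-E$ and the box constraint. First I would write $E=(E_1,E_2)$ conformably with the pair $(p,q)$ and, using $\nabla L=\mathcal{L}^\top L$, apply the elementary variational formula $|z|=\max_{|p|\le 1}pz$ to each coordinate of $\nabla L(x)-E(x)$. Summing over $x$ turns the penalty into $\beta\sum_x|\nabla L(x)-E(x)|=\beta\max_{(p,q)\in\mathcal{P}}\big(\langle\mathcal{L}^\top L,(p,q)\rangle-Tr(p^\top E_1)-Tr(q^\top E_2)\big)=\beta\max_{(p,q)\in\mathcal{P}}\big(\langle L,\mathcal{L}(p,q)\rangle-Tr(p^\top E_1)-Tr(q^\top E_2)\big)$, where the last equality is the adjointness of $\mathcal{L}$ and $\mathcal{L}^\top$. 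Substituting this into (\ref{eq:FGPobjective}) yields the saddle-point problem $\min_{L\in C}\max_{(p,q)\in\mathcal{P}}\Phi(L,p,q)$ with $\Phi(L,p,q)=\tfrac12\|L-d\|^2+\beta\langle L,\mathcal{L}(p,q)\rangle-\beta Tr(p^\top E_1)-\beta Tr(q^\top E_2)$ and $C=\{L:0\le L\le 1\}$.

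Next I would exchange the minimum and the maximum. For fixed $(p,q)$ the map $L\mapsto\Phi(L,p,q)$ is strictly convex, and for fixed $L$ the map $(p,q)\mapsto\Phi(L,p,q)$ is affine, hence concave; since $C$ is nonempty closed convex and $\mathcal{P}$ is compact and convex, Sion's minimax theorem (equivalently, the strong convexity in $L$ rules out a duality gap) lets me rewrite the value as $\max_{(p,q)\in\mathcal{P}}\min_{L\in C}\Phi(L,p,q)$, and the strict convexity in $L$ makes the inner minimizer unique for every $(p,q)$.

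Then I would carry out the inner minimization. Completing the square gives $\Phi(L,p,q)=\tfrac12\|L-(d-\beta\mathcal{L}(p,q))\|^2+\tfrac12\|d\|^2-\tfrac12\|d-\beta\mathcal{L}(p,q)\|^2-\beta Tr(p^\top E_1)-\beta Tr(q^\top E_2)$, so the minimizer over $C$ is the Euclidean projection $L^\star(p,q)=P_C(d-\beta\mathcal{L}(p,q))$, and the minimal value of the quadratic part equals $\tfrac12\|H_C(d-\beta\mathcal{L}(p,q))\|^2$ because $H_C$ is exactly the projection residual $L\mapsto L-P_C(L)$. Substituting back, discarding the additive constant $\tfrac12\|d\|^2$, and negating to turn the outer maximization into a minimization reproduces exactly $\min_{(p,q)\in\mathcal{P}}H(p,q)$ of (\ref{eq:dualFGPpq}). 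Finally, if $(p,q)$ attains this minimum, then $(L^\star(p,q),p,q)$ is a saddle point of $\Phi$, so its primal component $L=L^\star(p,q)=P_C(d-\beta\mathcal{L}(p,q))$ solves (\ref{eq:FGPobjective}), which is the assertion.

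The hard part will be the minimax exchange: I need to verify the regularity hypotheses --- compactness of $\mathcal{P}$, the convex--concave structure, and strict convexity in $L$ --- carefully enough that no duality gap can occur and that the dual optimum genuinely transfers back to a primal optimum through the projection formula. The remaining ingredients --- the coordinatewise use of $|z|=\max_{|p|\le1}pz$, the adjoint identity, and completing the square --- are routine, but throughout I would keep the boundary conventions $p_{0,j}=p_{h,j}=q_{i,0}=q_{i,w}=0$ in force so that $\mathcal{L}$ and $\mathcal{L}^\top$ remain true adjoints.
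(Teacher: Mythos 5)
Your proposal is correct and follows essentially the same route as the paper's own proof: dualizing the penalty via $|z|=\max_{|p|\le 1}pz$, exchanging $\min$ and $\max$, completing the square so the inner minimizer is $P_C(d-\beta\mathcal{L}(p,q))$, and substituting back to obtain $H(p,q)$. The only difference is that you justify the minimax exchange more carefully (compactness of $\mathcal{P}$ and convex--concave structure via Sion's theorem), whereas the paper simply asserts it.
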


\begin{proof} First note the following relation holds true:
\begin{equation}
|x| = \max_{p}\{px:|p|\leq 1\}.
\label{eq:absinequality}
\end{equation}
Hence, we can give
\begin{equation}
\begin{aligned}
\sum_k|\nabla_kL-E_k| = \max_{(p,q)\in \mathcal{P}} T(L,p,q),
\label{eq:dualdefine}
\end{aligned}
\end{equation}
where,
\begin{equation}
\begin{aligned}
T(L,p,q) = &\sum_{i=1}^{h\!-\!1}\sum_{j=1}^{w\!-\!1}\big[ p_{i,j}(L_{i\!+\!1,j}-L_{i,j}-E_{1_{i,j}})\\
&+ q_{i,j}(L_{i,j\!+\!1}-L_{i,j}-E_{2_{i,j}})
\big]\\
&+\sum_{i=1}^{h\!-\!1}p_{i,w}(L_{i\!+\!1,w}-L_{i,w}-E_{1_{i,w}})\\
&+\sum_{j=1}^{w\!-\!1}p_{h,j}(L_{h,j\!+\!1}-L_{h,j}-E_{2_{h,j}}).
\label{eq:TLpq}
\end{aligned}
\end{equation}
With this notation we have
\begin{equation}
\begin{aligned}
T(L,p,q) = Tr(\mathcal{L}(p,q)^\top L)-Tr(p^\top E_1)-Tr(q^\top E_2).
\label{eq:TLpqTrnotation}
\end{aligned}
\end{equation}
Thus the original problem (\ref{eq:FGPobjective}) becomes
\begin{equation}
\begin{aligned}
\min_{0\leq L \leq 1} \max_{(p,q)\in \mathcal{P}} \Big\{&\frac{1}{2}\|L-d\|^2 +
  \beta \big[Tr(\mathcal{L}(p,q)^\top L)\\&-Tr(p^\top E_1)-Tr(q^\top E_2)\big] \Big\}.
\label{eq:dualproblemform}
\end{aligned}
\end{equation}
Since the objective function is convex in $L$ and concave in $p,q$, we can exchange the order of the minimum and maximum and get
\begin{equation}
\begin{aligned}
\max_{(p,q)\in \mathcal{P}} \min_{0\leq L \leq 1} \Big\{&\frac{1}{2}\|L-d\|^2 +
  \beta \big[Tr(\mathcal{L}(p,q)^\top L)\\&-Tr(p^\top E_1)-Tr(q^\top E_2)\big] \Big\}.
\label{eq:mmdualproblemform}
\end{aligned}
\end{equation}
and which can be written as
\begin{equation}
\begin{aligned}
\max_{(p,q)\in \mathcal{P}} \min_{0\leq L \leq 1} \Big\{&\frac{1}{2}\big[\|L-(d-\beta \mathcal{L}(p,q))\|^2 -\|d-\beta \mathcal{L}(p,q)\|^2
 \\&+ \|d\|^2\big]-\beta \big[Tr(p^\top E_1)+Tr(q^\top E_2)\big] \Big\}.
\label{eq:mmdualproblemformsimple}
\end{aligned}
\end{equation}
Thus the optimal solution of the inner minimization problem is
\begin{equation}
\begin{aligned}
L = P_{\{0\leq L \leq 1\}}(d-\beta \mathcal{L}(p,q)).
\label{eq:finalsolutionL}
\end{aligned}
\end{equation}
And last, we plug the above expression for $L$ back into (\ref{eq:mmdualproblemformsimple}) and ignore the constant term, we obtain the dual problem
is
\begin{equation}
\begin{aligned}
\underset{(p,q)\in \mathcal{P}}{min}\big\{ H(p,q)\equiv \frac{1}{2}( - ||H_C(d-\beta \mathcal{L}(p,q))||^2+\\||d-\beta \mathcal{L}(p,q)||^2) + \beta \big[Tr(p^\top E_1)+Tr(q^\top E_2)\big] \big\},\nonumber
\end{aligned}
\end{equation}
which is the same as (\ref{eq:dualFGPpq}).
\end{proof}

what's more, given (\ref{eq:dualFGPpq}), we can easily have following lemma.
\vspace{-5pt}
\begin{lemma}\label{lemma:derOfh}
The objective funtion $H$ of (\ref{eq:dualFGPpq}) is continuously differentiable and its gradient is given by
\vspace{-5pt}
\begin{eqnarray}
\begin{aligned}
\nabla{}H(p,q)=-\beta\mathcal{L}^\top P_C(d-\beta\mathcal{L}(p,q))+\beta(E_1,E_2).\label{eq:derOfH}
\end{aligned}
\end{eqnarray}
And let $L(H)$ be the Lipschitz constant of $\nabla{}H(p,q)$, then $L(H)\leq 8\beta^2$.
\end{lemma}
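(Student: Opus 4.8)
The plan is to prove the two assertions of Lemma~\ref{lemma:derOfh} in turn: first the differentiability of $H$ together with the formula \eqref{eq:derOfH} for $\nabla H$, and then the bound $L(H)\le 8\beta^2$ on the Lipschitz constant of this gradient.

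For the gradient formula, I would start from the definition of $H$ in \eqref{eq:dualFGPpq} and split it into the quadratic-type part $\Phi(p,q)\equiv\tfrac12\bigl(-\|H_C(d-\beta\mathcal{L}(p,q))\|^2+\|d-\beta\mathcal{L}(p,q)\|^2\bigr)$ and the linear part $\beta[\operatorname{Tr}(p^\top E_1)+\operatorname{Tr}(q^\top E_2)]$, whose gradient is obviously the constant $\beta(E_1,E_2)$. For $\Phi$, the key observation is the identity $\tfrac12\bigl(\|u\|^2-\|H_C(u)\|^2\bigr)=\tfrac12\|P_C(u)\|^2+\langle H_C(u),P_C(u)\rangle$, or more directly that $u\mapsto \tfrac12\|u\|^2-\tfrac12\|u-P_C(u)\|^2$ is the Moreau envelope-type function whose gradient is $P_C(u)$ (using that $P_C$ is the gradient of a convex function and that $H_C=\mathrm{Id}-P_C$); equivalently one can cite the analogous computation in \cite{beck2009fast,beck2009}. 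Writing $u=u(p,q)=d-\beta\mathcal{L}(p,q)$, the chain rule with the (linear) map $(p,q)\mapsto u$, whose adjoint is $-\beta\mathcal{L}^\top$, then yields $\nabla\Phi(p,q)=-\beta\mathcal{L}^\top P_C(d-\beta\mathcal{L}(p,q))$, and adding the linear part gives exactly \eqref{eq:derOfH}. Continuity of $\nabla H$ follows because $P_C$ is (firmly) nonexpansive, hence continuous, and $\mathcal{L},\mathcal{L}^\top$ are linear.

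For the Lipschitz bound, I would estimate directly: for any $(p_1,q_1),(p_2,q_2)\in\mathcal P$,
\begin{equation}
\begin{aligned}
\|\nabla H(p_1,q_1)-\nabla H(p_2,q_2)\|
&=\beta\,\bigl\|\mathcal{L}^\top\bigl(P_C(d-\beta\mathcal{L}(p_1,q_1))-P_C(d-\beta\mathcal{L}(p_2,q_2))\bigr)\bigr\|\\
&\le \beta\,\|\mathcal{L}^\top\|\cdot\|P_C(u_1)-P_C(u_2)\|\\
&\le \beta\,\|\mathcal{L}^\top\|\cdot\|u_1-u_2\|
= \beta\,\|\mathcal{L}^\top\|\cdot\beta\,\|\mathcal{L}(p_1,q_1)-\mathcal{L}(p_2,q_2)\|\\
&\le \beta^2\,\|\mathcal{L}^\top\|\,\|\mathcal{L}\|\,\|(p_1,q_1)-(p_2,q_2)\|,
\end{aligned}
\end{equation}
where I used nonexpansiveness of the projection $P_C$ and $\|\mathcal{L}^\top\|=\|\mathcal{L}\|$. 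So it remains to show $\|\mathcal{L}\|^2=\|\mathcal{L}^\top\mathcal{L}\|\le 8$. The standard way is to bound $\|\mathcal{L}(p,q)\|^2\le 8(\|p\|^2+\|q\|^2)$ for all $(p,q)$: expand $\mathcal{L}(p,q)_{i,j}=p_{i-1,j}-p_{i,j}+q_{i,j-1}-q_{i,j}$, apply $(\alpha_1+\alpha_2+\alpha_3+\alpha_4)^2\le 4(\alpha_1^2+\alpha_2^2+\alpha_3^2+\alpha_4^2)$, sum over $i,j$, and use that each entry $p_{i,j}$ (resp. $q_{i,j}$) appears in at most two terms of the sum, giving the factor $4\cdot 2=8$.

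The main obstacle is the first part: getting the gradient of the term $-\tfrac12\|H_C(d-\beta\mathcal{L}(p,q))\|^2+\tfrac12\|d-\beta\mathcal{L}(p,q)\|^2$ right, since it involves differentiating through the nonsmooth-looking projection $P_C$. The resolution is the classical fact that the Moreau envelope $u\mapsto \min_{z\in C}\tfrac12\|u-z\|^2$ is continuously differentiable with gradient $u-P_C(u)=H_C(u)$ even though $P_C$ itself is only Lipschitz, so $\tfrac12\|u\|^2-\tfrac12\|H_C(u)\|^2$ (which differs from the envelope by $\min_{z\in C}\langle u,z\rangle$-type terms — more precisely equals $\langle u,P_C(u)\rangle-\tfrac12\|P_C(u)\|^2$, the Fenchel-type conjugate expression) has gradient exactly $P_C(u)$; once this is invoked, the chain rule is routine. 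Everything else — the linear part, continuity, and the operator-norm computation for $\mathcal{L}$ — is elementary.
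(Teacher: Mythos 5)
Your proof is correct and follows essentially the same route as the paper's: it differentiates the Moreau-envelope-type term $\tfrac12\|u\|^2-\tfrac12\|H_C(u)\|^2$ to get $P_C(u)$, applies the chain rule through the linear map $(p,q)\mapsto d-\beta\mathcal{L}(p,q)$ with adjoint $-\beta\mathcal{L}^\top$, and then combines nonexpansiveness of $P_C$ with $\|\mathcal{L}\|\leq\sqrt{8}$ for the Lipschitz bound. The only difference is that you prove the estimate $\|\mathcal{L}(p,q)\|^2\leq 8(\|p\|^2+\|q\|^2)$ directly, whereas the paper simply cites \cite{beck2009fast} for it.
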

\begin{proof} Consider the function $s:\mathbb{R}^{h\!\times\!w}\rightarrow \mathbb{R}$ defined by
\begin{equation}
\begin{aligned}
s(L)=\|H_C(L)\|^2.\label{eq:defineofs}
\end{aligned}
\end{equation}
Then the dual function (\ref{eq:dualFGPpq}) can be written as:
\begin{equation}
\begin{aligned}
H(p,q)=\frac{1}{2}( - s(d-\beta \mathcal{L}(p,q))+||d-\beta \mathcal{L}(p,q)||^2\big)\\ + \beta \big[Tr(p^\top E_1)+Tr(q^\top E_2)\big].\label{eq:defineofs}
\end{aligned}
\end{equation}
Obviously, $s(\cdot)$ is continuously differentiable and its gradient is given by
\begin{equation}
\begin{aligned}
\nabla s(L)=2(L-P_C(L)).
\end{aligned}
\end{equation}
Therefore,
\begin{equation}
\begin{aligned}
&\nabla H(p,q) \\
&= \frac{1}{2}\nabla \big(- s(d-\beta \mathcal{L}(p,q))+||d-\beta \mathcal{L}(p,q)||^2 \big)+\beta(E_1,E_2)\\
&=\frac{1}{2}\beta\mathcal{L}^\top \big(\nabla s(d-\beta \mathcal{L}(p,q))-2(d-\beta \mathcal{L}(p,q)) \big)+\beta(E_1,E_2)\\
&=-\beta\mathcal{L}^\top P_C(d-\beta \mathcal{L}(p,q))+\beta(E_1,E_2)
\end{aligned}
\end{equation}

Then for every two pairs of matrices $(p_1,q_1),(p_2,q_2)$ where $p_i\in \mathbb{R}^{(h\!-\!1)\!\times\!w}$ and $q_i\in \mathbb{R}^{w\!\times\!(w\!-\!1)}$ for $i=1,2$, we have
\begin{equation}
\begin{aligned}
&\|\nabla H(p_1,q_1)- \nabla H(p_2,q_2)\|\\
&= \beta\|\mathcal{L}^\top [P_C(d-\beta \mathcal{L}(p_1,q_1))]- \mathcal{L}^\top [P_C(d-\beta \mathcal{L}(p_2,q_2))]\|\\
&\leq \beta\| \mathcal{L}^\top \| \|P_C(d-\beta \mathcal{L}(p_1,q_1))-P_C(d-\beta \mathcal{L}(p_2,q_2)) \|\\
&\leq \beta^2 \| \mathcal{L}^\top \| \|\mathcal{L}(p_1,q_1)-\mathcal{L}(p_2,q_2) \|\\
&\leq \beta^2 \| \mathcal{L}^\top \| \|\mathcal{L}\| \|(p_1,q_1)-(p_2,q_2) \|\\
&= \beta^2 \| \mathcal{L}^\top \|^2 \|(p_1,q_1)-(p_2,q_2) \|
\end{aligned}
\end{equation}
here the above inequalities follow from the non-expensiveness property of the orthogonal projection operator and property of linear operators $\mathcal{L},\mathcal{L}^\top$. And from \cite{beck2009fast}, we have $\|\mathcal{L}^\top(x) \|\leq \sqrt{8}\|x\|$. Therefore, implying that $\|\mathcal{L}^\top\| \leq \sqrt{8}$ and hence $L(H)\leq 8\beta^2$.
\end{proof}
With definition of $H(p,q)$ and $\nabla H(p,q)$, fast gradient projection (FGP) is applied on the dual problem (\ref{eq:dualFGPpq}). And the complexity of each iteration in FGP is $O(hw)$. Above all, our proposed Parallel Algorithm with Constrained Total Variation (PACTV) is using FGP to solve the $m+1$ dual problems (\ref{eq:dualFGPpq}) in parallel. Then we catenate the optimal $L_i^*$ $(i=1,\dots,m\!+\!1)$ and resize them into vector form to achieve $\lv_k$.

\begin{algorithm}[bht]\label{alg:FGP}
\SetKwInOut{Input}{Input} \SetKwInOut{Output}{Output}
\Input{$d\in \mathbb{R}^{h\!\times\!w}$, $\beta=\lambda/\L$, $N$ is the total number of iterations, $E_1,E_2$.}
\textbf{Step 0}: Take $(\tilde{p}_1,\tilde{q}_1)=(p_0,q_0)=(0_{(h\!-\!1)\!\times\!w},0_{h\!\times\!(w\!-\!1)}),t_1=1.$\\
\textbf{Step k}:$(n\geq1)$ Compute
\vspace{-5pt}
\begin{eqnarray}
(p_k,q_k)&=&P_{\mathcal{P}}\big[(\tilde{p}_k,\tilde{q}_k)-\frac{1}{8\beta^2}\nabla{}H(\tilde{p}_k,\tilde{q}_k) \big],\qquad\qquad\\
\vspace{-3pt}
t_{k+1}&=&\frac{1+\sqrt{1+4t_n^2}}{2},\qquad\qquad\qquad\\
\vspace{-3pt}
(\tilde{p}_{k\!+\!1},\tilde{q}_{k\!+\!1})\!&=&\!(p_k,q_k)+(\frac{t_k\!-\!1}{t_{k\!+\!1}})(p_k\!-\!p_{k\!-\!1},q_k\!-\!q_{k\!-\!1})
\vspace{-5pt}
\end{eqnarray}

\Output{$L^*$ An optimal solution of (\ref{eq:FGPobjective}) up to a tolerance.}
\caption{FGP($b,\beta,N,E_1,E_2$)}
\end{algorithm}


Given above proposition and lemma, we can use the fast gradient projection (FGP) on dual problem (\ref{eq:dualFGPpq}). Fast gradient projection (FGP) is outlined in Algorithm~\ref{alg:FGP}. Here $P_{\mathcal{P}}(p,q)$ means projecting the matrix-pair $(p,q)$ on the set $\mathcal{P}$.  And finally we achieve the optimal solution of (\ref{eq:FGPobjective}). Then our recovering method ESRA is outlined in Algorithm~\ref{alg:ESRA}.

In our implementations, we set the total iteration number of ESRA is 100 and FGP tolerance is $0.0001$, and we also set $\L=2L(f)$ to ensure a constant stepsize. The initial value of $\lv$ is zero. The final recovered reflected layers of (\ref{eq:originalLoss}) should be warped with $f_i$ and enhance the intensity by 2 to be visible. Our recovering method launches a general optimization framework and can be extended to solve other reconstruction problems in~\cite{GaiKun09CVPR,gai2011blind}.

\bibliographystyle{IEEE}
\bibliography{references2}

\end{document}